\newtheorem{theorem}{Theorem} 
\newtheorem{definition}{Definition}
\newtheorem{lemma}{Lemma}
\newtheorem{remark}{Remark}
\newtheorem{example}{Example}
\newtheorem{corollary}{Corollary}
\newcommand\independent{\protect\mathpalette{\protect\independenT}{\perp}}
\def\independenT#1#2{\mathrel{\rlap{$#1#2$}\mkern2mu{#1#2}}}
\icmltitlerunning{A New Measure of Conditional Dependence }
\begin{document} 

\twocolumn[
\icmltitle{A New Measure of Conditional Dependence }


\icmlauthor{Jalal Etesami}{etesami2@illinois.edu}
\icmladdress{Department of ISE, Coordinated Science Laboratory,\\
			University of Illinois at Urbana-Champaign Urbana, IL 61801 USA}
\icmlauthor{Kun Zhang}{kunz1@andrew.cmu.edu}
\icmladdress{Department of Philosophy, \\
Carnegie Mellon University, Pittsburgh, PA 15213 USA }
\icmlauthor{Negar Kiyavash}{kiyavash@illinois.edu}
\icmladdress{Department of ECE and ISE, Coordinated Science Laboratory,\\
			University of Illinois at Urbana-Champaign Urbana, IL 61801 USA}


\vskip 0.3in
]

\begin{abstract}
Measuring conditional dependencies among the variables of a network is of great interest to many disciplines. 
This paper studies some shortcomings of the existing dependency measures in detecting direct causal influences or their lack of ability for group selection to capture strong dependencies and accordingly introduces a new statistical dependency measure to overcome them. 
This measure is inspired by Dobrushin's coefficients and based on the fact that there is no dependency between $X$ and $Y$ given another variable $Z$, if and only if the conditional distribution of $Y$ given $X=x$ and $Z=z$ does not change when $X$ takes another realization $x'$ while $Z$ takes the same realization $z$.
We show the advantages of this measure over the related measures in the literature. 
Moreover, we establish the connection between our measure and the integral probability metric (IPM) that helps to develop estimators of the measure with lower complexity compared to other relevant information theoretic based measures.
Finally, we show the performance of this measure through numerical simulations.
\end{abstract}

\section{Introduction}

Identifying the conditional independencies (CIs) among the variables or processes in a systems is a fundamental problem in scientific investigations in different fields such as biology, econometric, social sciences, and many others.

  

In probability theory, two events $X$ and $Y$ are conditionally independent given a third event $Z$, if the occurrence or non-occurrence of $X$ and $Y$ are ``independent" events in their conditional probability distribution given $Z$ \cite{gorodetskii1978strong}.
There are several CI measures in literature that have been developed for different applications to capture such independency. For instance, the most commonly used one is conditional mutual information (CMI) \cite{gorodetskii1978strong} that is an information theoretical quantity. This measure has been used in different fields such as communication engineering, channel coding \cite{cover2012elements}, and causal discovery \cite{spirtes2000causation}.
CMI between $X$ and $Y$ given $Z$ is defined by comparing two conditional distributions: $P(X|Y,Z)$ and $P(X|Z)$ using KL-divergence and then taking average over the conditioning variable $Z$. Hence, it is limited to those realizations with positive probability (see Section \ref{sec:info}).
One shortcoming of such measure is that it cannot capture CIs that occur rarely or even over zero measure sets. Another shortcoming of this measure is that it is symmetric and thus it fails to encode asymmetric dependencies such as causal directions in a network.

Most of the conditional dependency/independency measures are defined similar to the CMI in a sense that they take average over the conditioning variables. Kernel-based method in \cite{ZhangPJS2011} is another example.
Consequently, such measures may fail to distinguish the range of the conditioning variable $Z$ in which the dependency between the variables of interest $X$ and $Y$ is more clearer.
For example, consider a treatment that has different effects on a special disease for different genders. 
There are scenarios in which the previous CI measures (e.g., CMI) fail to identify for which gender the effect of the treatment on the disease is maximized (see Section \ref{sec:cc}).

Discovering the causal relationships in a network is one of the main applications for CI measures \cite{spirtes2000causation}. In this area, it is important to capture the direct causal influence between two variables in a network independent of the other causal indirect influences between them. As we will show in Section \ref{sec:direct}, previous CI measures (e.g., CMI) cannot capture the direct causal influences between two variables (cause and effect) in a network when some variables in the indirect causal path depend on the cause almost deterministically.

The main contribution of this paper is the introduction of a statistical metric inspired by Dobrushin's coefficient \cite{dobrushin1970prescribing} to measure the dependency/independency between $X$ and $Y$ given $Z$ in a network from their realizations.
Our metric has been developed based on the paradigm that if $Y$ has no dependency on $X$ given $Z$, then the conditional distribution of $Y$ given $X=x$ and $Z=z$ will not change if $x$ varies and $Z$ takes the same realization $z$. We will show that this dependency measure overcomes the aforementioned limitations. Moreover, we will establish the connection between our meausre and the IPM to develop estimators for our metric with lower complexity compared to other relevant information-theoretic based measures such as CMI. This is because the proposed estimators depend on the sample points only through the metric of the space, and thus its complexity is independent of the dimension of the samples.

Perhaps the best known paradigm for visualizing the CIs among the variables of a network is Bayesian networks \cite{pearl2003causality}. They are directed acyclic graphs (DAGs) in which nodes represent random variables and directed edges denote the direction of causal influences. 
Analogously, using the dependency measure in this work, we can represent the causal structure of a network via a DAG that possesses the same properties as the Bayesian networks.

It is also worth mentioning that there exist several measures to capture CIs and the causal influences among time series, for instance, 
transfer entropy \cite{schreiber2000measuring} and directed information \cite{massey1990causality}. Measuring the reduction of uncertainty in one variable after knowing another variable is the key idea in such measures. 
Because these measure are defined based on CMI, they also suffer the aforementioned limitations.
Note that the proposed measure can easily be modified to capture such influences in time series as well.

\vspace{-.3cm}

\section{Definitions}\label{sec:pre}
\vspace{-.2cm}

In this Section, we review some basic definitions and our notation. 
Throughout this paper we use capital letters to represent random variables, lowercase letters to denote a realization of a random variable, and bold capital letters to denote matrices. 
We denote a subset of random variables with index set $\mathcal{K}\subseteq[m]$, where $[m]:=\{1,...,m\}$ by $\underline{X}_{\mathcal{K}}$ and $[m]\setminus\{j\}$ by $-\{j\}$. 

In a directed graph $\overrightarrow{G}=(V,\overrightarrow{E})$, we denote the parent set of a node $i\in V$ by $Pa_i:=\{j: (j,i)\in\overrightarrow{E}\}$, and denote the set of its non-descendant\footnote{A node $v$ is a non-descendant of another node $u$, if there is no direct path from $u$ to $v$.} by $Nd_i$. 
We use $X\independent Y|Z$ to denote $X$ and $Y$ are independent given $Z$.

\textbf{Bayesian Network}:\ 
A Bayesian network is a graphical model that represents the conditional independencies among a set of random variables via a directed acyclic graph (DAG) \cite{spirtes2000causation}.
A set of random variables $\underline{X}$ is Bayesian with respect to a DAG $\overrightarrow{G}$, if
\vspace{-.1cm}
\begin{small}
\begin{align}\label{fact}
P(\underline{X})=\prod_{i=1}^m P(X_i|\underline{X}_{Pa_i}).
\end{align}
\end{small}Up to some technical conditions \cite{lauritzen1996graphical}, this factorization is equivalent to the \textit{causal Markov} condition.
 Causal Markov condition states that a DAG is only acceptable as a possible causal hypothesis if every node is conditionally independent of its non-descendant given its parents. 
  
 Corresponding DAG of a joint distribution possesses \textit{Global Markov} condition if for any disjoint set of nodes $\mathcal{A}$, $\mathcal{B}$, and $\mathcal{C}$ for which $\mathcal{A}$ and $\mathcal{B}$ are d-separated\footnote{It is d-seperated by $Z$ if it contains a collider $\rightarrow\!\cdot\!\leftarrow$ whose descendants are not in $Z$ or a non-collider in $Z$. }  by $\mathcal{C}$, then $\underline{X}_{\mathcal{A}}\independent\underline{X}_{\mathcal{B}}|\underline{X}_{\mathcal{C}}$.
It is shown in \cite{lauritzen1996graphical} that causal Markov condition and Global Markov condition are equivalent. 
  
\textbf{Faithfulness}:\ 
 A joint distribution is called \textit{faithful} with respect to a DAG if all the conditional independence (CI) relationships implied by the distribution can also be found from its corresponding DAG using d-separation and vice versa\footnote{The set of distributions that do not satisfy this assumption has measure zero \cite{meek1995strong}.}  \cite{pearl2014probabilistic}. 
 It is possible that several DAGs encode the same set of CI relationships. In this case, they are called Markov equivalence. 

 \vspace{-.3cm}
\section{New Dependency Measure}\label{sec:cau}
\vspace{-.1cm}

{
As we mentioned earlier, we use the following paradigm to define our measure of independency: if $Y$ has no dependency on $X$ given $Z$, then the conditional distribution of $Y$ given $X=x$ and $Z=z$ should not change when $X$ takes different realization $x'$ while $Z$ takes the same realization $z$.
This paradigm is similar in nature to Pearl's paradigm of causal influence \cite{pearl2003causality}. He proposed that the influence of a variable (potential cause) on another variable (effect) in a network is assessed by assigning different values to the potential cause, while other variables' effects are removed, and observing the behavior of the effect variable.}
Below, we formally introduce our dependency measure.

Consider $\underline{X}$ a collection of $m$ random variables. In order to identify the dependency of $X_i$ on $X_j$, we select a set of indices $\mathcal{K}$, where $\mathcal{K}\subseteq-\{i,j\}$ and consider the following two probability measures:
\begin{small}
\begin{equation}\label{mui}
  \begin{aligned}
    \mu_i(\underline{x}_{\mathcal{K}\cup\{j\}})\!:=\!P\Big{(}X_i\Big{|}\underline{X}_{\mathcal{K}\cup\{j\}}=\underline{x}_{\mathcal{K}\cup\{j\}}\Big{)},\\
    \mu_i(\underline{y}_{\mathcal{K}\cup\{j\}})\!:=\!P\Big{(}X_i\Big{|}\underline{X}_{\mathcal{K}\cup\{j\}}=\underline{y}_{\mathcal{K}\cup\{j\}}\Big{)},
  \end{aligned}
\end{equation}
\end{small}where $\underline{x}_{\mathcal{K}\cup\{j\}}$ and $\underline{y}_{\mathcal{K}\cup\{j\}}\in E^{|\mathcal{K}|+1}$ are two realizations for $\small{\underline{X}_{\mathcal{K}\cup\{j\}}}$ that are the same every where except at $X_j$. 
Further, assume $\underline{x}_{\mathcal{K}\cup\{j\}}$ at position $X_{j}$ equals $x$ and $\underline{y}_{\mathcal{K}\cup\{j\}}$ equals $y$ ($y\neq x$) at this position. 
If there exists a subset $\mathcal{K}\subseteq-\{i,j\}$ such that for all such realizations, $\mu_i(\underline{x}_{\mathcal{K}\cup\{j\}})$ and $\mu_i(\underline{y}_{\mathcal{K}\cup\{j\}})$ are the same, then we say $X_i$ has no dependency on $X_j$.  
This is analogous to the conditional independence that states if $X_j$ and $X_i$ are independent given some $\underline{X}_{\mathcal{K}}$, then there is no causal influence between them.
Note that using mere observational data, comparing the two conditional probabilities in (\ref{mui}) reveals the dependency between $X_i$ and $X_j$. However, when interventional data is available, we can identify whether $X_j$ causes $X_i$, i.e., the direction of influence.

In order to compare the two probability measure in (\ref{mui}), a metric on the space of probability measures is required. There are several metrics that can be used such as KL-divergence, total variation, etc \cite{gibbs2002choosing}. 
For instance, using the KL-divergence will lead to develop CI test-based approaches \cite{singh1995construction}.
 In this work, we use Wasserstein distance and discuss the advantage of using such metric in Section \ref{sec:com}.
\begin{definition}
 Let $(E, d)$ be a metrical complete and separable space equipped with the Borel field $\mathcal{B}$, and let $\mathcal{M}$ be the space of all probability measures on $(E,\mathcal{B})$. Given $\nu_1,\nu_2\in\mathcal{M}$, the Wasserstein metric between $\nu_1, \nu_2$ is given by
$
W_d(\nu_1,\nu_2):=\inf_{\pi}\left(\mathbb{E}_{\pi}[d(x,y)] \right)
$, where the infimum is taken over all probability measures $\pi$ on $E\times E$ such that its marginal distributions are  $\nu_1$ and $\nu_2$, respectively. 
\end{definition}
Using the above distance, we define the dependency of $X_i$ on $X_j$ given $\mathcal{K}\subseteq-\{i,j\}$ as follows:
\begin{small}
\begin{eqnarray}\label{dobb}
\small{c^{\mathcal{K}}_{i,j}\!:=\!\!\!\!\!\!\!\!\sup_{\underline{x}_{\mathcal{K}\cup\{j\}}=\underline{y}_{\mathcal{K}\cup\{j\}}, \ \text{off}\ j}\!\!\!\!\dfrac{W_d\Big{(}\mu_i(\underline{x}_{\mathcal{K}\cup\{j\}}),\mu_i(\underline{y}_{\mathcal{K}\cup\{j\}})\Big{)}}{d(x,y)}}.
\end{eqnarray}
\end{small}The suprimum is over all realizations $\underline{x}_{\mathcal{K}\cup\{j\}}$ and $\underline{y}_{\mathcal{K}\cup\{j\}}$ that only differ at the $j$th variable. 
 Moreover, we assume $\underline{x}_{\mathcal{K}\cup\{j\}}$ at $j$th position equals $x$ and $\underline{y}_{\mathcal{K}\cup\{j\}}$ equals $y$ ($y\neq x$) at this position. 
 When $\mathcal{K}=-\{i,j\}$, $c^{\mathcal{K}}_{i,j}$ is called Dobrushin's coefficient \cite{dobrushin1970prescribing}.
Similarly, we define the dependency of a set of nodes $\mathcal{B}$ on a disjoint set $\mathcal{A}$ given $\mathcal{K}$, where $\mathcal{K}\cap(\mathcal{A}\cup\mathcal{B})=\emptyset$, as follows,
 \begin{small}
 \begin{eqnarray}\label{dobbset}
\small{c^{\mathcal{K}}_{\mathcal{B},\mathcal{A}}\!:=\!\!\!\!\!\!\!\!\sup_{{\underline{x}_{\mathcal{K}\cup\mathcal{A}}=\underline{y}_{\mathcal{K}\cup\mathcal{A}},\ \text{off}\ \mathcal{A}}}\!\!\!\!\dfrac{W_d\Big{(}\mu_\mathcal{B}(\underline{x}_{\mathcal{K}\cup\mathcal{A}}),\mu_\mathcal{B}(\underline{y}_{\mathcal{K}\cup\mathcal{A}})\Big{)}}{d(\underline{x}_\mathcal{A},\underline{y}_\mathcal{A})}}.
\end{eqnarray}
\end{small}\begin{remark}
An alternative way of interpreting the above measure is via an equivalent network in which all the nodes in the set $\mathcal{K}\cup\{j\}$ are injected with independent inputs that have distributions equal to their marginals, i.e., node $k$ is injected with an independent random variable that has distribution $P(X_k)$. 
In this equivalent network, the dependency of $i$ on $j$ given $\mathcal{K}$ can be expressed by
\begin{small}
\begin{align*}
&\int_E \prod_{k\in\mathcal{K}}P(\underline{X}_k=\underline{x}_k)P(X_j=y)P(X_j=x) \\ \notag
&\dfrac{W_d\Big{(}\mu_i(\underline{x}_{\mathcal{K}\cup\{j\}}),\mu_i(\underline{y}_{\mathcal{K}\cup\{j\}})\Big{)}}{d(x,y)}d\underline{x}_k dxdy.
\end{align*}
\end{small}Clearly, this expression is bounded above by (\ref{dobb}).
\end{remark}

\vspace{-.3cm}
\subsection{Maximum Mean Discrepancy}\label{sec:md}
Using a special case of the duality theorem of Kantorovich and Rubinstein \cite{villani2003topics}, we obtain an alternative approach for computing the Wasserstein metric as follows:
\begin{small}
\begin{equation}\label{dual}
W_d(\nu_1,\nu_2)=\sup_{f\in\mathcal{F}_L}\left\vert\int_{E}fd\nu_1-\int_{E}fd\nu_2\right\vert,
\end{equation}
\end{small}where $\mathcal{F}_L$ is the set of all continuous functions satisfying the Lipschitz condition:
\begin{small}
$
||f||_{\text{Lip}}:=\sup_{x\neq y}|f(x)-f(y)|/d(x,y) \leq1.
$
\end{small}This representation of the Wasserstein metric is a special form of integral probability metric (IPM) \cite{muller1997integral} that has been studied extensively in probability theory \cite{dudley2002real} with applications in empirical process theory \cite{van1996weak}, transportation problem \cite{villani2003topics}, etc. 
IPM is defined similar to (\ref{dual}) but instead of $\mathcal{F}_L$, the suprimum is taken over a class of real-valued bounded measurable functions on $E$.

One particular instance of IPM is maximum mean discrepancy (MMD) in which the suprimum is taken over $\mathcal{F}_\mathcal{H}:=\{f : ||f||_\mathcal{H} \leq 1\}$. More precisely, MMD is defined as
\begin{small}
\begin{align}\label{dobb3}
\text{MMD}(\nu_1,\nu_2):=\sup_{f\in\mathcal{F}_\mathcal{H}}\left\vert\int_{E}fd\nu_1-\int_{E}fd\nu_2\right\vert,
\end{align}
\end{small}Here, $\mathcal{H}$ represents a reproducing kernel Hilbert space (RKHS) \cite{aronszajn1950theory} with reproducing kernel $k(\cdot,\cdot)$. 
MMD has been used in statistical applications such as independence testing and testing for conditional independence \cite{gretton2007kernel,fukumizu2007kernel,sun2007kernel}.

It is shown in \cite{gretton2006kernel} that when $\mathcal{H}$ is a universal RKHS \cite{micchelli2006universal}, defined on the compact metric space $E$, then $\text{MMD}(\nu_1,\nu_2)=0$ if and only if $\nu_1=\nu_2$. 
In this case, MMD can also be used to compare the two conditional distributions in (\ref{mui}). 
This is because, $\text{MMD}(\mu_i(\underline{x}_{\mathcal{K}\cup\{j\}}),\mu_i(\underline{y}_{\mathcal{K}\cup\{j\}}))=0$ implies that the two conditional distributions are the same.
This allows us to define a new dependency measure which we denoted it by $\tilde{c}^{\mathcal{K}}_{i,j}$ similar to (\ref{dobb}) that uses MMD instead of Wasserstein distance.
It is straight forward to show that this measure has similar properties as the one in (\ref{dobb}). The main difference between these two measures is their estimation method that we discuss in Section \ref{sec:com}.

\vspace{-.3cm}
\section{Advantages of the Dependency Measure}\label{sec:disc}
\vspace{-.1cm}
Herein, we discuss the advantages of our measure over other dependency measures in the literature.

\vspace{-.2cm}
\subsection{Mutual Information and Information Flow}\label{sec:info}
\vspace{-.1cm}
Conditional mutual information is an information theoretic measure that has been used in the literature to identify the conditional independence structure of a network. This measure compares two probability measures 
\begin{small}$P(X_i|X_j,\underline{X}_{\mathcal{K}})$\end{small} and \begin{small}$P(X_i|\underline{X}_{\mathcal{K}})$\end{small} using the KL-divergence as follows,
\begin{small}
\begin{align}\label{conmu}
I(X_i;X_j|\underline{X}_{\mathcal{K}}):=\sum_{x_i,x_j,\underline{x}_{\mathcal{K}}}P(x_i,x_j,\underline{x}_{\mathcal{K}})\log\frac{P(x_i|x_j,\underline{x}_{\mathcal{K}})}{P(x_i|\underline{x}_{\mathcal{K}})}.
\end{align}
\end{small}This measure is symmetric and hence it cannot capture the direction of influence. Moreover, it only compares the probability measures over all pairs $(X_i,X_j)$ that have positive probability. 
Note that any other measures in the literature that is based on conditional independence test such as the kernel-based methods in \cite{sun2007kernel,ZhangPJS2011} have the similar limitation.

\begin{example}\label{ex:mu}
Consider a network of two variables $X$ and $Y$, in which $X\sim\mathcal{N}(0,1)$ is a zero mean Gaussian variable and $Y$ is $\mathcal{N}(0,1)$ whenever $X$ is a rational number and $\mathcal{N}(1,2)$ otherwise. In this network, $Y$ is dependent on $X$ but it cannot be captured using CI. This is because $I(X;Y)=0$. On the other hand, we have $c_{y,x}>0$ and $c_{x,y}=0$. 
\end{example}

Another quantity that has been introduced in the literature to quantify causal influences in a network is information flow \cite{ay2008information}. This quantity is defined using Pearl's do-calculus \cite{pearl2003causality}.
 Intuitively, operating $do(x_i)$ removes the dependencies of $X_i$ on its parents, and replaces $P(X_i|\underline{X}_{Pa_i})$ with the delta function. 
 Herein, to give an interpretation on how (\ref{dobb}) can be used to identify causal relationships that are defined in terms of intervention, we compare our measure with information flow.
 
Below, we introduce the formal definition of information flow from $\underline{X}_A$ to $\underline{X}_B$ imposing $\underline{X}_\mathcal{K}$, $I(\underline{X}_A\rightarrow\underline{X}_B|do(\underline{X}_\mathcal{K}))$, where $A$, $B$, and $\mathcal{K}$ are three disjoint subsets of $V$.
\begin{small}
\begin{align}\label{con1}
&\sum_{\underline{x}_{A\cup B\cup \mathcal{K}}}P(\underline{x}_\mathcal{K})P(\underline{x}_A|do(\underline{x}_\mathcal{K}))P(\underline{x}_B|do(\underline{x}_{A\cup \mathcal{K}}))\\ \notag
&\log\frac{P(\underline{x}_B|do(\underline{x}_{A\cup \mathcal{K}}))}{\sum_{\underline{x}'_{A}}P(\underline{x}'_A|do(\underline{x}_\mathcal{K}))P(\underline{x}_B|do(\underline{x}'_A,\underline{x}_\mathcal{K}))}.
\end{align}
\end{small}This is defined analogous to the conditional mutual information in (\ref{conmu}). But unlike the conditional mutual information, the information flow is defined for all pairs $(\underline{x}_A; \underline{x}_C)$ rather than being limited to those with positive probability (similar to our measure).  
Similar measures are introduced in \cite{janzing2013quantifying,ay2007geometric} which are also based on do-calculation. 
Analogously, we can define our measure based on do-operation in order to capture the direction of causal influences in a network by substituting the conditional distributions in (\ref{mui}) with their $do$ versions.


Because the Wasserstein metric can be estimated using a linear programming (see Section \ref{sec:com}), our measure has computational advantages over the information flow or other similar measures that uses KL-divergence.
Another advantage of (\ref{dobb}) over the information flow is that it requires less number of interventions in case of using interventional data. More precisely, calculating (\ref{con1}) requires at least two do-operations $(do(\underline{x}_{A\cup\mathcal{K}})$ and $do(\underline{x}_{\mathcal{K}}))$ but (\ref{dobb}) requires only one $(do(\underline{x}_{\mathcal{K}\cup\{j\}}))$. 
Moreover, as the next example shows, unlike our measure, the information flow depends on the underlying DAG.

\begin{example}
Consider a network of three binary random variables $\{X,Y,Z\}$ with $Z=X\oplus Y$ an XOR. 
Suppose the underlying DAG of this network is given by Figure \ref{ex:Figf}(b), in which $X$ takes zero with probability $b$.
In this case, $I(X\rightarrow Z|do(Y))=H(b)$, where $H$ denotes the entropy\footnote{More precisely, $H(b)=-b\log b-(1-b)\log(1-b)$.}.
However, if the underlying DAG is given by Figure \ref{ex:Figf}(a), we have $I(X\rightarrow Z|do(Y))=H(\epsilon)$. 
Now, consider a scenario in which $\epsilon$ tends to zero. In this scenario, both DAGs describe a system in which $X=Y$ and $Z=0$. However, in (b), we have $I(X\rightarrow Z|do(Y))=H(b)>0$, while in (a), $I(X\rightarrow Z|do(Y))\rightarrow0$.
But $c^y_{z,x}$ in both DAGs is independent of $\epsilon$ and it is positive.
\end{example}

\begin{figure*}
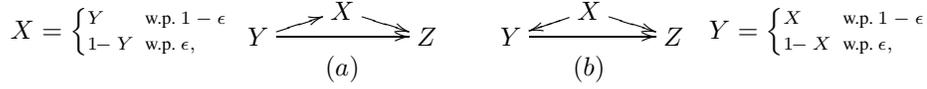

\hspace{1.7cm}
{\xygraph{ !{<0cm,0cm>;<.8cm,0cm>:<0cm,.7cm>::} 
!{(1,.8) }*+{X}="x" 
!{(-2.7,.4) }*+{
\begin{scriptsize}
X=\begin{cases}
Y &\!\! \text{w.p.}\ 1-\epsilon\\
1\!\! -Y &\!\! \text{w.p.}\ \epsilon,
\end{cases}
\end{scriptsize}
}="x22" 
!{(-.4,0.3) }*+{Y}="y"
!{(2.4,0.3)}*+{Z}="z"
!{(1,-.3) }*+{(a)}="" 
!{(5.1,.8) }*+{X}="x1" 
!{(8.9,.4) }*+{
\begin{scriptsize}
Y=\begin{cases}
X &\!\! \text{w.p.}\ 1-\epsilon\\
1\!\! -X &\!\! \text{w.p.}\ \epsilon,
\end{cases}
\end{scriptsize}
}="x212" 
!{(3.8,0.3) }*+{Y}="y1"
!{(6.5,0.3)}*+{Z}="z1"
!{(5.1,-.3) }*+{(b)}="" 
 "x1":"z1"  "y1":"z1"  "x1":"y1"
 "x":"z"  "y":"z"  "y":"x"}\, \, \,}
\caption{DAGs for which information flow fails to capture the influence.}\label{ex:Figf}
\end{figure*}

\vspace{-.3cm}
\subsection{A Better Measure for Direct Causal Influences}\label{sec:direct}
\vspace{-.2cm}
Consider a network comprises of three random variables $\{X,Y,Z\}$, in which $Y=f(X,W_1)$ and $Z=g(X,Y,W_2)$, such that 
the transformations from $(X,W_1)$ to $(X,Y)$ and from $(X,Y,W_1)$ to $(X,Y,Z)$ are invertible and $W_1$ and $W_2$ are independent exogenous noises.  
In other words, there exist functions $\phi$ and $\varphi$ such that $W_1=\phi(X,Y)$ and $W_2=\varphi(X,Y,Z)$.
Furthermore, $f$ is an injective function in its first argument, i.e., if $f(x_1,w)=f(x_2,w)$ for some $w$, then $x_1=x_2$. 

In order to measure the direct influence from $X$ to $Z$, one may compute the conditional mutual information between $X$ and $Z$ given $Y$, i.e., $I(X;Z|Y)$. 
However, this is not a good measure because as the dependency of $Y$ on $X$ grows, i.e., $H(Y|X)\rightarrow 0$, then $I(X;Z|Y)\rightarrow0$.
This can be explained
by the fact that as $H(Y|X)$ goes to zero, in other words, as $P_{W_1}$ tends to $\delta_{w_0}(W_1)$ for some fixed value $w_0$, then by specifying the value of $X$, the ambiguity about the value of $Y$ will go to zero. 
Thus, using the injective property of $f$, it is straight forward to see that $I(X;Z|Y)\rightarrow0$.

This analysis shows that $I(X;Z|Y)$ fails to capture the direct influence between $X$ and $Z$ when $Y$ depends on $X$ almost in a deterministic manner. However, looking at $c^{y}_{z,x}$, we have
\begin{small}
\begin{align*}
\!c^{y}_{z,x}\!=
\!\!\sup_{y,x,x'}\!\frac{W_d\left(P_{x,y}(Z),P_{x',y}(Z)\right)}{d(x,x')},
\end{align*}
\end{small}where \begin{footnotesize}$P_{x,y}(Z):=P_{W_2}(\varphi(x,y,Z))|\frac{\partial g}{\partial W_2}(x,y,\varphi(x,y,Z))|^{-1}\!\!.$ \end{footnotesize}
This distribution depends only on realizations of $(X,Y)$ and it is independent of $P_{X,Y}$. Hence, changing the dependency between $X$ and $Y$ will not affect $c^{y}_{z,x}$, which makes it a better candidate to measure the direct influences between variables of a network. 
 As an illustration, we present a simple example. But first, we need the following result.
 \begin{theorem}\label{coro}
Consider $\overline{X}=\textbf{A}\overline{X}+\overline{W}$, where $\textbf{A}$ has zero diagonals and its support represents a DAG. $\overline{W}$ is a vector of zero mean independent random variables. Then, $c^{Pa_i\setminus\{j\}}_{i,j}=|A_{i,j}|.$
\end{theorem}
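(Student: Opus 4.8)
The plan is to observe that, because $j \in Pa_i$, the conditioning set in $c^{Pa_i\setminus\{j\}}_{i,j}$ is $\mathcal{K}\cup\{j\}=Pa_i$, so we are conditioning on the full parent set of node $i$. Under the linear model $\overline{X}=\mathbf{A}\overline{X}+\overline{W}$ this collapses the conditional law of $X_i$ into a mere translate of the law of the noise $W_i$, and the whole theorem then reduces to computing the Wasserstein distance between two such translates. First I would establish the key independence $W_i \independent \underline{X}_{Pa_i}$. Since the support of $\mathbf{A}$ is a DAG, each parent $X_k$ with $k\in Pa_i$ can be solved for recursively as a deterministic function of the noise variables indexed by $k$ together with the ancestors of $k$; because $i$ is a strict descendant of each of its parents, the acyclicity guarantees that $W_i$ never appears in these expressions. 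Combined with the assumed mutual independence of the entries of $\overline{W}$, this yields $W_i \independent \underline{X}_{Pa_i}$.

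Granting this, conditioning on $\underline{X}_{Pa_i}=\underline{x}_{Pa_i}$ gives
$$X_i \mid \underline{X}_{Pa_i}=\underline{x}_{Pa_i} \;\stackrel{d}{=}\; \sum_{k\in Pa_i} A_{i,k}\,x_k + W_i,$$
so $\mu_i(\underline{x}_{Pa_i})$ is simply the law of $W_i$ shifted by the constant $\sum_{k} A_{i,k} x_k$. Now fix two admissible realizations $\underline{x}_{Pa_i}$ and $\underline{y}_{Pa_i}$ that agree off position $j$, with $x_j=x$ and $y_j=y$. By the displayed identity both $\mu_i(\underline{x}_{Pa_i})$ and $\mu_i(\underline{y}_{Pa_i})$ are translates of the law of $W_i$, differing only by the shift $\delta:=A_{i,j}(x-y)$.

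It then remains to show that, taking $d$ to be the usual metric on the line, the Wasserstein distance between a measure and its translate by $\delta$ equals $|\delta|$. For the upper bound I would use the deterministic coupling $t\mapsto(t,t+\delta)$, which has the correct marginals and cost $\mathbb{E}\,|t-(t+\delta)|=|\delta|$. For the matching lower bound I would invoke the Kantorovich--Rubinstein dual form (\ref{dual}) with the $1$-Lipschitz test function $f(t)=t$; since translation shifts the mean by exactly $\delta$, this gives $\bigl|\int f\,d\nu_1-\int f\,d\nu_2\bigr|=|\delta|$. Hence $W_d\bigl(\mu_i(\underline{x}_{Pa_i}),\mu_i(\underline{y}_{Pa_i})\bigr)=|A_{i,j}|\,|x-y|=|A_{i,j}|\,d(x,y)$, so the ratio inside the supremum in (\ref{dobb}) equals $|A_{i,j}|$ for \emph{every} admissible pair of realizations. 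Taking the supremum therefore yields $c^{Pa_i\setminus\{j\}}_{i,j}=|A_{i,j}|$.

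The step I expect to be the main obstacle is the independence claim $W_i \independent \underline{X}_{Pa_i}$: this is precisely where the acyclicity of the support of $\mathbf{A}$ is essential, and some care is needed to verify that conditioning on the parents does not implicitly constrain $W_i$ through a descendant of $i$. Once that is in place, the remaining Wasserstein computation for translates is routine, the ratio being constant in the realizations so that the supremum is attained trivially.
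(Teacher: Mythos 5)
Your proof is correct and follows essentially the same route as the paper: the paper's Lemma~\ref{pp3} lower bound is exactly your Kantorovich--Rubinstein bound with the $1$-Lipschitz function $f(t)=t$, and the coupling used in its Lemma~\ref{pp2} (matching the noise realizations via $\Theta_{\overline{x}_{Fp_i}}(X_i)=\Theta_{\overline{y}_{Fp_i}}(Y_i)$) reduces in the linear case $G_i\equiv 1$ to your translate coupling $t\mapsto(t,t+\delta)$. The only differences are that the paper establishes these two bounds for the more general location--scale model $X_i=F_i(\underline{X}_{Fp_i})+G_i(\underline{X}_{Fp_i})W_i$ and then specializes, whereas you argue the linear case directly and, usefully, make explicit the independence $W_i\independent\underline{X}_{Pa_i}$ that the paper leaves implicit.
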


 \begin{example}\label{example12}
Consider a network of three variables $\{X,Y,Z\}$ in which $Y=aX+W_1$ and $Z=bX+cY+W_2$ for some non-zero coefficients $\{a,b,c\}$ and exogenous noises $\{W_1,W_2\}$. Hence,  
 \begin{small}
 \begin{align}\label{eq:w4}
 &I(X;Z|Y)=H(bX+W_2| aX+W_1)-H(W_2).
 \end{align}
 \end{small}As we mentioned earlier, by reducing the variance of $W_1$, the first term in (\ref{eq:w4}) tends to $H(bX+W_2|X)=H(W_2)$. Hence, (\ref{eq:w4}) goes to zero.  But, using the result of Theorem \ref{coro}, we have $c^{y}_{z,x}=|b|$, which is independent of the variance of  $W_1$.
 \end{example}

\vspace{-.2cm}
\subsection{Group Selection for Effective Intervention}\label{sec:cc}
\vspace{-.2cm}

Consider a network of three variables $\{X, Y, C\}$ in which $C$ is a common cause for $X$ and $Y$, and $X$ influences $Y$.
In this network, to measure the influence of $X$ on $Y$, one may consider \begin{small}$P(Y|do(X))$\end{small} that is given by \begin{small}$\sum_c P(Y|X,c)P(c)=\mathbb{E}_c[P(Y|X,c)]$\end{small}. See, e.g., the back-door criterion in \cite{pearl2003causality}.
This conditional distribution is an average over all possible realizations of the common cause $C$. 

Consider an experiment that is been conducted on a group of people with different ages $C$ in which the goal is to identify the effect of a treatment $X$ on a special disease $Y$.
Suppose that this treatment has clearer effect on that disease for elderly people and less obvious effect for younger ones.
In this case, averaging the effect of the treatment on the disease for all people with different ages, i.e., \begin{small}$P(Y|do(X))$\end{small} might not reveal the true effect of the treatment. Hence, it is important to identify a regime (in this example age range) of $C$ in which the influence of $X$ on $Y$ is maximized. As a consequence, we can identify the group of subjects on which the intervention is effective.

Note that this problem cannot be formalized using do-operation or other measures that take average over all possible realizations of $C$.
However, using the measure in \eqref{dobb}, we can formulate this problem as follows: given $X=x$ and two different realizations for $C$, say $c$ and $c'$, we obtain two conditional probabilities \begin{small}$P(Y|x,c)$\end{small} and {\small$P(Y|x,c')$}. 
Then, we say in group $C=c$, the causal influence between $X$ and $Y$ is more obvious compare to the group $C=c'$, if given $C=c$, changing the assignments of $X$ leads to larger variation of the conditional probabilities compared to changing the assignment of $X$ given $C=c'$. More precisely, if 
$c_{y,x}^{C=c}\geq c_{y,x}^{C=c'}$, where 
\begin{small}
\begin{align}\label{eq:common}
&c_{y,x}^{C=c}:=\sup_{x\neq x'}\frac{W_d\Big(P(Y|x,c),P(Y|x',c)\Big)}{d(x,x')}.
\end{align}
\end{small}Note that $c_{y,x}^{c}=\sup_{c}c_{y,x}^{C=c}$, where $c_{y,x}^{c}$ is given in \eqref{dobb}. Using this new formulation, we define the range of $C$ in which the influence from $X$ to $Y$ is maximized as  $\arg\max_{c}c_{y,x}^{C=c}$.

\begin{example}
Suppose that $Y=CX+W_2$ and $X=W_1/C$, where $C$ takes value from $\{1,...,M\}$ w.p. $\{p_1,...,p_M\}$ and $W_i\sim\mathcal{N}(0,1)$. In this case, we have $c_{y,x}^{C=c}=|c|$. Thus, $C=M$ will show the influence of $X$ on $Y$ more clearer.
On the other hand, such property cannot be detected using other measures. For example, we have
\begin{small}
$
I(X;Y|C=c)=0.5\log(2),
$ 
\end{small} for all $c$.
\end{example}


\vspace{-.3cm}
\section{Properties of the Measure}\label{sec:pro}
\vspace{-.1cm}
  \begin{lemma}\label{lemma1}
 The measure defined in (\ref{dobb}) possesses the following properties: (1) \textit{Asymmetry}: In general $c^{\mathcal{K}}_{i,j}\neq c^{\mathcal{K}}_{j,i}$. (2) $c^{\mathcal{K}}_{i,j}\geq0$ and when it is zero, we have $X_i\independent  X_j|\underline{X}_{\mathcal{K}}$. (3)  \textit{Decomposition}: $c^{\mathcal{K}}_{i,\{j,k\}}=0$ implies $c^{\mathcal{K}}_{i,j}\!=\!c^{\mathcal{K}}_{i,k}\!=\!0$. (4) \textit{Weak union}: If $c^{\mathcal{K}}_{i,\{j,k\}}\!=0$, then $c^{\mathcal{K}\cup\{k\}}_{i,j}\!=\!c^{\mathcal{K}\cup\{j\}}_{i,k}\!=0$. (5) \textit{Contraction}: If $c^{\mathcal{K}}_{i,j}\!=\!c_{i,{\mathcal{K}}}\!=0$, then $c_{i,{\mathcal{K}}\cup\{j\}}\!=0$. (6) \textit{Intersection}: If $c^{\mathcal{K}\cup\{k\}}_{i,j}\!\!=c^{\mathcal{K}\cup\{j\}}_{i,k}\!\!=0$, then $c^{\mathcal{K}}_{i,\{j,k\}}=0$.
 \end{lemma}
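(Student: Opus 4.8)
The plan is to prove each of the six properties by reducing it to a statement about the Wasserstein distance $W_d$ and the conditional distributions $\mu_i$. The unifying observation is that $c^{\mathcal{K}}_{i,j}=0$ holds if and only if $\mu_i(\underline{x}_{\mathcal{K}\cup\{j\}})$ does not depend on the $j$th coordinate, i.e. $P(X_i\mid \underline{X}_{\mathcal{K}},X_j=x)=P(X_i\mid \underline{X}_{\mathcal{K}},X_j=x')$ for (almost) all realizations, which by the defining paradigm is exactly the conditional independence statement $X_i\independent X_j\mid \underline{X}_{\mathcal{K}}$. This equivalence comes from the fact that $W_d(\nu_1,\nu_2)=0$ iff $\nu_1=\nu_2$ (a metric), so the numerator in \eqref{dobb} vanishes for every admissible pair exactly when the conditional law of $X_i$ is insensitive to $X_j$. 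Once this translation is in place, properties (3)--(6) become the standard \emph{graphoid axioms} (decomposition, weak union, contraction, intersection), and the strategy is to invoke their known proofs for conditional independence, taking care that our measure reads the dependence \emph{asymmetrically} (of $X_i$ on the conditioning block) so only the relevant ``one-sided'' half of each axiom is needed.

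First I would dispatch (1) and (2) directly. For \textbf{asymmetry}, I would exhibit a witness — Example~\ref{ex:mu} already furnishes a case with $c_{y,x}>0$ but $c_{x,y}=0$, so I would simply cite it. For (2), nonnegativity is immediate since $W_d\ge 0$ and $d(x,y)>0$ in the supremum; for the implication, I would argue that $c^{\mathcal{K}}_{i,j}=0$ forces $W_d\big(\mu_i(\underline{x}_{\mathcal{K}\cup\{j\}}),\mu_i(\underline{y}_{\mathcal{K}\cup\{j\}})\big)=0$ for all admissible realizations differing only in the $j$th coordinate, hence $\mu_i(\underline{x}_{\mathcal{K}\cup\{j\}})=\mu_i(\underline{y}_{\mathcal{K}\cup\{j\}})$, i.e. the conditional law of $X_i$ given $\underline{X}_{\mathcal{K}}$ is unaffected by $X_j$, which yields $X_i\independent X_j\mid \underline{X}_{\mathcal{K}}$.

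Next I would handle (3)--(6) through the CI-equivalence established in (2). For \textbf{decomposition} (3): $c^{\mathcal{K}}_{i,\{j,k\}}=0$ gives $X_i\independent \{X_j,X_k\}\mid \underline{X}_{\mathcal{K}}$, and restricting the joint insensitivity to varying $X_j$ alone (holding $X_k$ fixed, or vice versa) shows $\mu_i$ is insensitive to each coordinate separately, hence $c^{\mathcal{K}}_{i,j}=c^{\mathcal{K}}_{i,k}=0$. For \textbf{weak union} (4): from $X_i\independent \{X_j,X_k\}\mid \underline{X}_{\mathcal{K}}$ I would derive that adding $X_k$ to the conditioning set cannot create dependence on $X_j$, since $P(X_i\mid \underline{X}_{\mathcal{K}},X_j,X_k)$ collapses to $P(X_i\mid \underline{X}_{\mathcal{K}})$ and is therefore constant in $X_j$, giving $c^{\mathcal{K}\cup\{k\}}_{i,j}=0$ (symmetrically for $c^{\mathcal{K}\cup\{j\}}_{i,k}$). \textbf{Contraction} (5) and \textbf{intersection} (6) I would reduce to the usual chaining/combining arguments on conditional densities: (5) combines insensitivity of $\mu_i$ to $X_j$ given $\underline{X}_{\mathcal{K}}$ with insensitivity to $\underline{X}_{\mathcal{K}}$ to conclude insensitivity to the block $\underline{X}_{\mathcal{K}\cup\{j\}}$; (6) combines the two weak-union-type hypotheses, and as in the classical graphoid case this requires a positivity assumption on the joint density to license dividing by conditional probabilities.

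\textbf{The main obstacle} is property (6), \emph{intersection}. The classical intersection axiom for conditional independence is false without a positivity (strict positivity of the joint distribution) assumption, and the same caveat must govern our measure; moreover, because $c^{\mathcal{K}}_{i,j}$ is defined via a \emph{supremum} over realizations rather than an expectation, I must be careful that the pointwise equalities $\mu_i(\underline{x}_{\mathcal{K}\cup\{j\}})=\mu_i(\underline{y}_{\mathcal{K}\cup\{j\}})$ hold for \emph{all} admissible realizations, not merely almost everywhere. I would therefore either assume the density is positive on its support (consistent with the faithfulness and technical-conditions language already invoked in Section~\ref{sec:pre}) or restrict the supremum to the support of the conditioning variables. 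With that caveat made explicit, the combining argument proceeds exactly as in the density proof of the graphoid intersection axiom, and the supremum formulation causes no further trouble because every step is a pointwise identity among conditional laws.
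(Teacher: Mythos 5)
Your overall strategy---translate $c^{\mathcal{K}}_{i,j}=0$ into pointwise equality of the conditional laws and then manipulate those equalities---is the same as the paper's, and your treatment of (1), (2), and (5) matches what the authors do (they prove (2) via the total probability law and (5) via the triangle inequality for $W_d$, which is the ``chaining'' you describe). But two steps need repair. First, your decomposition argument is not decomposition: holding $X_k$ fixed while varying $X_j$ shows that $P(X_i\mid x_j,x_k,\underline{x}_{\mathcal{K}})$ is constant in $x_j$, which is precisely the statement $c^{\mathcal{K}\cup\{k\}}_{i,j}=0$, i.e.\ weak union. To get $c^{\mathcal{K}}_{i,j}=0$ you must remove $X_k$ from the conditioning set, which the paper does by averaging over $x_k$ with the total probability law, $P(X_i\mid x_j,\underline{x}_{\mathcal{K}})=\sum_{x_k}P(X_i\mid x_j,x_k,\underline{x}_{\mathcal{K}})P(x_k\mid x_j,\underline{x}_{\mathcal{K}})$, and then observing that the common value of the inner terms does not depend on $x_j$.

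Second, and more importantly, your handling of intersection runs against the point the authors are making. You import the classical positivity caveat, but the paper states explicitly, immediately after the lemma, that intersection holds for this measure \emph{without} positivity, precisely because (\ref{dobb}) takes a supremum over \emph{all} realizations rather than only those of positive measure: the two hypotheses give $P(X_i\mid x_j,x_k,\underline{x}_{\mathcal{K}})=P(X_i\mid x_k,\underline{x}_{\mathcal{K}})$ and $P(X_i\mid x_j,x_k,\underline{x}_{\mathcal{K}})=P(X_i\mid x_j,\underline{x}_{\mathcal{K}})$ as identities at every point, so $P(X_i\mid x_j,\underline{x}_{\mathcal{K}})=P(X_i\mid x_k,\underline{x}_{\mathcal{K}})$ is simultaneously free of $x_k$ and of $x_j$, hence equals $P(X_i\mid\underline{x}_{\mathcal{K}})$ by the total probability law, and $c^{\mathcal{K}}_{i,\{j,k\}}=0$ follows with no positivity assumption. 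The classical counterexamples to the intersection axiom live exactly on the positive-probability-only formulation, so adding positivity back in both weakens the statement and erases the feature that distinguishes this measure from ordinary conditional independence. Your implicit worry---that conditional laws at measure-zero realizations must be well defined---is legitimate, but it is an assumption already built into the definition of the measure itself, not something to be patched at the level of this lemma.
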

 Note that unlike the intersection property of the conditional independence, which does not always hold, the intersection property of the dependency measure in (\ref{dobb}) always holds. This is due to the fact that (\ref{dobb}) is defined for all realizations $(x_j,\underline{x}_\mathcal{K})$ not only those with positive measure.
See Example \ref{ex:mu} for the asymmetric property of $c^{\mathcal{K}}_{i,j}$.
  
We say a DAG possesses global Markov property with respect to (\ref{dobb}) if for any node $i$ and disjoint sets $\mathcal{B}$, and $\mathcal{C}$ for which $i$ is d-separated from $\mathcal{B}$ by $\mathcal{C}$, we have $c^\mathcal{C}_{i,\mathcal{B}}=c^\mathcal{C}_{\mathcal{B},i}=0$.
Using the above Lemma and the results of Theorem 3.27 in \cite{lauritzen1996graphical}, it is straightforward to show that a faithful network of $m$ random variables whose causal structure is a DAG possesses the global Markov property\footnote{See Appendix for more details.}.
This property can be used to develop reconstruction algorithms (e.g., PC algorithm \cite{spirtes2000causation}) for the causal structure of a network. 



\begin{figure*}
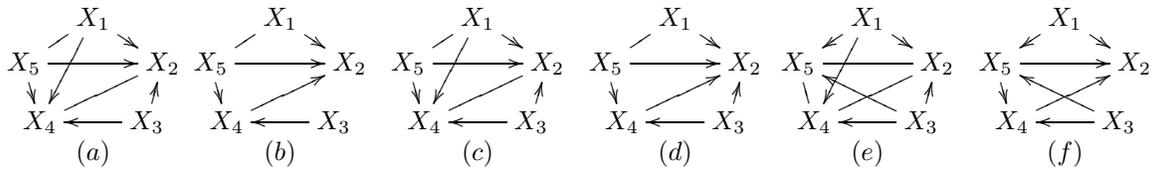

\hspace{1cm}
{\xygraph{ !{<0cm,0cm>;<.71cm,0cm>:<0cm,.6cm>::} 
!{(1,1) }*+{X_1}="x1" 
!{(-.3,0) }*+{X_5}="x5"
!{(2.3,0)}*+{X_2}="x2"
!{(0,-1.3) }*+{X_4}="x4"
!{(2,-1.3)}*+{X_3}="x3"
!{(1,-2) }*+{(a)}
!{(4.5,1) }*+{X_1}="x1a" 
!{(3.2,0) }*+{X_5}="x5a"
!{(5.8,0)}*+{X_2}="x2a"
!{(3.5,-1.3) }*+{X_4}="x4a"
!{(5.5,-1.3)}*+{X_3}="x3a"
!{(4.5,-2) }*+{(b)}
!{(8.2,1) }*+{X_1}="x1b" 
!{(6.9,0) }*+{X_5}="x5b"
!{(9.5,0)}*+{X_2}="x2b"
!{(7.2,-1.3) }*+{X_4}="x4b"
!{(9.2,-1.3)}*+{X_3}="x3b"
!{(8.2,-2) }*+{(c)}
!{(11.9,1) }*+{X_1}="x1c" 
!{(10.6,0) }*+{X_5}="x5c"
!{(13.2,0)}*+{X_2}="x2c"
!{(10.9,-1.3) }*+{X_4}="x4c"
!{(12.9,-1.3)}*+{X_3}="x3c"
!{(11.9,-2) }*+{(d)}
!{(15.5,1) }*+{X_1}="x1d" 
!{(14.2,0) }*+{X_5}="x5d"
!{(16.8,0)}*+{X_2}="x2d"
!{(14.5,-1.3) }*+{X_4}="x4d"
!{(16.5,-1.3)}*+{X_3}="x3d"
!{(15.5,-2) }*+{(e)}
!{(19.2,1) }*+{X_1}="x1e" 
!{(17.9,0) }*+{X_5}="x5e"
!{(20.5,0)}*+{X_2}="x2e"
!{(18.2,-1.3) }*+{X_4}="x4e"
!{(20.2,-1.3)}*+{X_3}="x3e"
!{(19.2,-2)}*+{(f)}
"x1":"x2" "x5":"x2" "x4"-"x2" "x5":"x4" "x1"-"x5" "x3":"x2" "x1":"x4" "x3":"x4"
"x1a":"x2a" "x5a":"x2a" "x3a":"x4a" "x5a":"x4a" "x4a":"x2a" "x1a"-"x5a"
"x1b":"x2b" "x5b":"x2b" "x4b"-"x2b" "x5b":"x4b" "x1b"-"x5b" "x3b":"x2b" "x1b":"x4b" "x3b":"x4b"
"x1c":"x2c" "x5c":"x2c" "x3c":"x4c" "x5c":"x4c" "x4c":"x2c" "x1c"-"x5c" "x3c":"x2c"
"x1d":"x2d" "x1d":"x5d" "x1d":"x4d" "x3d":"x5d" "x3d":"x2d" "x3d":"x4d" "x4d"-"x5d" "x2d"-"x5d" "x2d"-"x4d"
"x1e":"x2e" "x5e":"x2e" "x3e":"x4e" "x5e":"x4e" "x4e":"x2e" "x1e":"x5e" "x3e":"x5e"
}\, \, \,}
\caption{Recovered DAGs of the system given in (\ref{sim:1}) for different sample sizes. (a)-(b) use the measure in (\ref{dobb}) and pure observation. (c)-(d) use kernel-based method and pure observation. (e)-(f) use the measure in (\ref{dobb}) and interventional data. (f) shows the true structure.}\label{fig:sim}
\end{figure*}

\vspace{-.2cm}
\subsection{Estimation}\label{sec:com}
\vspace{-.1cm}
The measure introduced in (\ref{dobb}) can be computed explicitly for special probability measures. For instance, if the joint distribution of $\underline{X}$ is Gaussian with mean $\vec{\mu}$ and covariance matrix $\Sigma$, then using the results of \cite{givens1984class},
 we obtain 
\begin{small}
$
c^{\mathcal{K}}_{i,j}=|\Sigma_{i,\{j,\mathcal{K}\}}\!\left(\Sigma_{\{j,\mathcal{K}\},\{j,\mathcal{K}\}}\right)^{-1}\!\!\!\textbf{e}_1|,
$
\end{small}where $\Sigma_{i,\{j,\mathcal{K}\}}$ denotes the sub-matrix of $\Sigma$ comprising row $i$ and columns $\{j,\mathcal{K}\}$, and $\textbf{e}_1=(1,0,...,0)^T$.  Hence, in such systems, one can estimate the dependency measure by estimating the covariance matrix.
However, this is not the case in general. Therefore, we introduce a non-parametric method for estimating our dependency measure using kernel method.

Given $\{x^{(1)},...,x^{(N_1)}\}$ and $\{x^{(N_1+1)},...,x^{(N_1+N_2)}\}$ that are i.i.d. samples drawn randomly from $\nu_1$ and $\nu_2$, respectively, the estimator of (\ref{dual}) is given by \cite{sriperumbudur2010non},
\begin{small}
\begin{equation}\label{estim}
\begin{aligned}
&\widehat{W}_d(\hat{\nu}_1,\hat{\nu}_2):=\max_{\{\alpha_i\}}\frac{1}{N_1}\sum_{i=1}^{N_1}\alpha_i-\frac{1}{N_2}\sum_{j=1}^{N_2}\alpha_{j+N_1},
\end{aligned}
\end{equation}
\end{small}such that $|\alpha_i-\alpha_j|\leq d(x^{(i)},x^{(j)}),\  \forall i,j.$
In this equation, $\hat{\nu}_1$ and $\hat{\nu}_2$ are empirical estimator of $\nu_1$ and $\nu_2$, respectively.
The estimator of MMD is given by 
\begin{small}
\begin{equation}\label{estim23}
\begin{aligned}
&(\widehat{\text{MMD}}(\hat{\nu}_1,\hat{\nu}_2))^2:={\sum_{i,j=1}^{N_1+N_2}y_i y_j k(x^{(i)},x^{(j)})},
\end{aligned}
\end{equation}
\end{small}where $y_i:=1/N_1$ for $i\leq N_1$ and $y_i:=-1/N_2$, elsewhere. $k(\cdot,\cdot)$ represents the kernel of $\mathcal{H}$.
It is shown in \cite{sriperumbudur2010non} that (\ref{estim}) converges to (\ref{dual}) as $N_1, N_2\rightarrow\infty$ almost surely as long as the underlying metric space is totally bounded. 
It is important to mention that the estimator in (\ref{estim}) depends on $\{x^{(j)}\}$s only through the metric $d(\cdot,\cdot)$, and thus its complexity is independent of the dimension of $x^{(i)}$, unlike the KL-divergence estimator \cite{wang2005divergence}. 
The estimator in (\ref{estim23}) also converges to (\ref{dobb3}) almost surely with the rate of order $\mathcal{O}(1/\sqrt{N_1}+1/\sqrt{N_2})$, when $k(\cdot,\cdot)$ is measurable and bounded. 

Consider a network of $m$ random variables $\underline{X}$. 
Given $N$ i.i.d. realizations of $\underline{X}$, $\{\underline{z}^{(1)},...,\underline{z}^{(N)}\}$, where $\underline{z}^{(l)}\in E^m$, we use (\ref{estim}) and define
\begin{small}
\begin{align}\label{estdob}
&{\widehat{c}^\mathcal{K}_{i,j}:=\max_{1\leq l,k\leq N}\dfrac{\widehat{W}_d\Big{(}\hat{\mu}_i\left(\underline{z}^{(l)}_{\mathcal{K}\cup\{j\}}\right),\hat{\mu}_i\left(\underline{z}^{(k)}_{\mathcal{K}\cup\{j\}}\right)\Big{)}}{d(z_j^{(l)},z_j^{(k)})}}, 
\end{align}
\end{small}such that $\underline{z}^{(l)}_{\mathcal{K}\cup\{j\}}=\underline{z}^{(k)}_{\mathcal{K}\cup\{j\}}\ \text{off} \ j$.
Similarly, one can introduce an estimator for $\tilde{c}^{\mathcal{K}}_{i,j}$ using (\ref{estim23}). By applying the result of Corollary 5 in \cite{spirtes2000constructing}, we obtain the following result.

\begin{corollary}
Let $(E,d)$ be a totally bounded metric space and a network of random variables with positive probabilities, then $\widehat{c}^\mathcal{K}_{i,j}$ converges to $c^\mathcal{K}_{i,j}$ almost surely as $N$ goes to infinity.
\end{corollary}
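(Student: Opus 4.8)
The plan is to exploit the positive-probability hypothesis to identify the supremum defining $c^\mathcal{K}_{i,j}$ with a maximum over a discrete family of conditioning configurations that the estimator can actually observe, and then to show that each individual plug-in term in (\ref{estdob}) converges almost surely to its population counterpart, after which the maximum passes to the limit. The crucial point is that (\ref{dobb}) takes its supremum over \emph{all} realizations of $\underline{X}_{\mathcal{K}\cup\{j\}}$, whereas (\ref{estdob}) can only range over \emph{observed} configurations; the assumption that every realization carries positive mass closes this gap, since then the realizations are exactly the atoms of the distribution and no zero-probability configuration can inflate the population supremum. Moreover, the realized coordinates $z_j^{(l)}$ entering the denominator of (\ref{estdob}) are precisely these atoms, so the denominators $d(z_j^{(l)},z_j^{(k)})$ coincide with the population $d(x,y)$ and remain bounded away from zero.

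First I would control the estimation of the conditional laws. Fix a configuration $\underline{x}_{\mathcal{K}\cup\{j\}}$ with $P(\underline{X}_{\mathcal{K}\cup\{j\}}=\underline{x}_{\mathcal{K}\cup\{j\}})=:p>0$. By the strong law of large numbers the number of samples $\underline{z}^{(m)}$ whose $\mathcal{K}\cup\{j\}$-coordinates match this configuration grows like $Np$ and hence tends to infinity almost surely; conditioned on matching, the associated values of $X_i$ are i.i.d.\ draws from the true conditional $\mu_i(\underline{x}_{\mathcal{K}\cup\{j\}})$. Thus the empirical conditional $\hat{\mu}_i(\underline{z}^{(l)}_{\mathcal{K}\cup\{j\}})$ appearing in (\ref{estdob}) is genuinely the empirical measure of a subsample of size tending to infinity drawn from $\mu_i(\underline{x}_{\mathcal{K}\cup\{j\}})$.

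Next I would invoke the consistency of the Wasserstein estimator. Since $(E,d)$ is totally bounded, the result quoted from \cite{sriperumbudur2010non} shows that the linear-programming estimator (\ref{estim}) converges almost surely to the true $W_d$ whenever both subsample sizes tend to infinity; applied to the two empirical conditionals above (this is the role played by Corollary 5 of \cite{spirtes2000constructing}), it yields $\widehat{W}_d(\hat{\mu}_i(\underline{x}),\hat{\mu}_i(\underline{y}))\to W_d(\mu_i(\underline{x}),\mu_i(\underline{y}))$ almost surely for each fixed pair of configurations. Because there are only finitely many such pairs (or countably many, with the supremum attained), the convergence holds simultaneously on the finite intersection of the corresponding probability-one events; dividing by the exact, strictly positive denominators preserves it, and taking the maximum over this fixed index set gives $\widehat{c}^\mathcal{K}_{i,j}\to c^\mathcal{K}_{i,j}$ almost surely by continuous mapping.

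The main obstacle is precisely the step that renders the conditional Wasserstein estimator consistent: one must turn the population ratio over a continuum of realizations in (\ref{dobb}) into a quantity estimable from finitely many samples, which is exactly why the positive-probability assumption is indispensable, as it guarantees that each conditioning event is revisited infinitely often so the plug-in conditional measures are true empirical measures of growing i.i.d.\ subsamples. Care is then needed to make the convergence uniform across all conditioning configurations and to accommodate the random, data-dependent subsample sizes; both are handled by the discreteness of the support together with total boundedness, after which the finiteness of the maximum in (\ref{estdob}) lets the limit pass through.
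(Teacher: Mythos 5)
Your argument is correct and follows essentially the same route the paper intends: the paper offers no written proof of this corollary beyond the one-line appeal to the almost-sure consistency of the IPM/Wasserstein estimator on totally bounded spaces, combined implicitly with the positive-probability assumption guaranteeing that each conditioning configuration is revisited infinitely often so that the plug-in conditional measures are genuine growing i.i.d.\ empirical measures. Your writeup simply makes explicit the steps the paper leaves to the cited consistency result (and is, if anything, more careful about the finitely-many-configurations bookkeeping than the paper itself).
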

\vspace{-.2cm}
\vspace{-.2cm}
{
\section{Experimental Results}\label{sec:simu}
\vspace{-.2cm}
Herein, we present two simulations in order to verify the theoretical results. In particular, the first experiment verifies the group selection advantages and the second one shows an application of the measure for capturing rare dependencies.
\\
\textbf{Group selection for :}\ \
In this simulation, we considered a group of individuals ($C\in$\{male,female\}) to study the effect of an special treatment $X$ on their health condition $Y$. For instance, $X$ can denote sleep aids and $Y$ can represent the individual's awareness level in the next morning. Most psychotropic drugs are metabolized in the liver. Because the male body breaks down Ambien and other sleep aids faster, women typically have more of the drug in their system the next morning.
For this simulation, we considered a mathematical model between $X,Y$, and $C$ as follows: $X=\mathcal{N}(1.5,1)$ and $Y=2X+\mathcal{N}(0,1)$, when $C=$female and $X=\mathcal{N}(1,4)$ and $Y=3X+\mathcal{N}(0,9)$, otherwise. 
\begin{figure}
	\label{fig::synloss}
	\hspace{-.4cm}
	\includegraphics[width=1.1\linewidth]{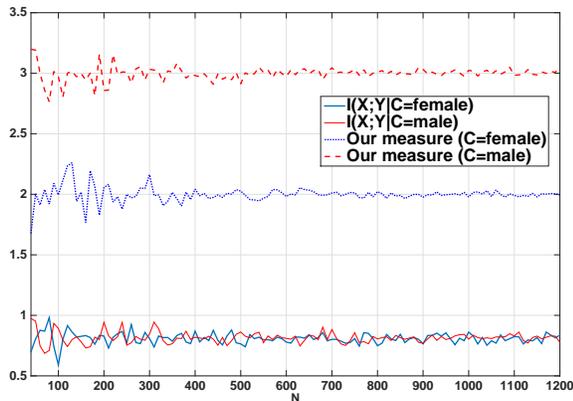}
	\caption{Estimated measures for different $N$.}\label{fig:c}
\end{figure}
Accordingly, we generated different sample sizes $N\in\{40,...,1200\}$ and estimated $I(X;Y|c)$ and $\hat{c}_{y,x}^c$. Figure \ref{fig:c} depicts the results. Since for given $c$, $(X,Y)$ is jointly Gaussian, we estimated $I(X;Y|c)$ by estimating the covariance matrix \cite{cover2012elements}, and estimated our measure using (\ref{estim23}) with Gaussian kernels. As Figure \ref{fig:c} shows, although the treatment has different effects on different genders, $I(X;Y|C)$ cannot capture that.

\textbf{Capturing rare dependencies:}\ \
We simulated the following non-linear system with \begin{small}$W_i\sim U[-1,1]$\end{small} and learned its corresponding structure.
\begin{small}
\begin{align}\notag
&X_1=W_1,\ \ X_2=X_1^2+2X_4-|X_5|+W_2, \ \ X_3=W_3, \\ \label{sim:1}
& X_4=X_3-X_5+W_4, \ \ X_5=W_5, \ \text{if}\ X_3\ \text{is natural}, \ \\ \notag &X_5=2\sqrt{|X_1|}+W_5, \ \text{o.t}.
\end{align}
\end{small}
 We used the estimator of MMD given in (\ref{estim23}) with Gaussian kernels and estimated the dependency measures. 
We obtained the corresponding DAG of this network given a set of observation of size ${\footnotesize N\in\{900,2500\}}$. 
Using the results on the convergence rate of the MMD estimator, we used a threshold of order $\mathcal{O}(1/\sqrt{N})$ to distinguish positive and zero measure.
Figure \ref{fig:sim} depicts the resulting DAGs.
We also compared the performance of our measure with the kernel-based method proposed in \cite{ZhangPJS2011}.
Note that in this example, since the influence of $X_3$ on $X_5$ is not detectable by mere observation, the best we can learn from mere observation is the DAG presented in Figure \ref{fig:sim}(b). 
However, with the same number of observations, the kernel-based method identifies an extra edge, Figure \ref{fig:sim}(d).

Next, we fixed the value of $X_3$ to be natural number and irrational, separately and observed the outcome of the other variables for different sample sizes. Figures \ref{fig:sim}(e)-(f) depict the outcomes of the learning algorithm that uses our measure.
In this case,  $X_3\rightarrow X_5$ was identified and then the Meek rules helped to detect all the directions even the direction of $X_1-X_5$ as it is shown in Figure \ref{fig:sim}(f).
}



\bibliographystyle{named}
\bibliography{ref}

\begin{thebibliography}{}

\bibitem[\protect\citeauthoryear{Aronszajn}{1950}]{aronszajn1950theory}
Nachman Aronszajn.
\newblock Theory of reproducing kernels.
\newblock {\em Transactions of the American mathematical society},
  68(3):337--404, 1950.

\bibitem[\protect\citeauthoryear{Ay and Krakauer}{2007}]{ay2007geometric}
Nihat Ay and David~C Krakauer.
\newblock Geometric robustness theory and biological networks.
\newblock {\em Theory in biosciences}, 125(2):93--121, 2007.

\bibitem[\protect\citeauthoryear{Ay and Polani}{2008}]{ay2008information}
Nihat Ay and Daniel Polani.
\newblock Information flows in causal networks.
\newblock {\em Advances in complex systems}, 11(01):17--41, 2008.

\bibitem[\protect\citeauthoryear{Cover and Thomas}{2012}]{cover2012elements}
Thomas~M Cover and Joy~A Thomas.
\newblock {\em Elements of information theory}.
\newblock John Wiley \& Sons, 2012.

\bibitem[\protect\citeauthoryear{Dobrushin}{1970}]{dobrushin1970prescribing}
Roland~L Dobrushin.
\newblock Prescribing a system of random variables by conditional
  distributions.
\newblock {\em Theory of Probability \& Its Applications}, 15(3):458--486,
  1970.

\bibitem[\protect\citeauthoryear{Dudley}{2002}]{dudley2002real}
Richard~M Dudley.
\newblock {\em Real analysis and probability}, volume~74.
\newblock Cambridge University Press, 2002.

\bibitem[\protect\citeauthoryear{Fukumizu \bgroup \em et al.\egroup
  }{2007}]{fukumizu2007kernel}
Kenji Fukumizu, Arthur Gretton, Xiaohai Sun, and Bernhard Sch{\"o}lkopf.
\newblock Kernel measures of conditional dependence.
\newblock In {\em NIPS}, volume~20, pages 489--496, 2007.

\bibitem[\protect\citeauthoryear{Gibbs and Su}{2002}]{gibbs2002choosing}
Alison~L Gibbs and Francis~Edward Su.
\newblock On choosing and bounding probability metrics.
\newblock {\em International statistical review}, 70(3):419--435, 2002.

\bibitem[\protect\citeauthoryear{Givens \bgroup \em et al.\egroup
  }{1984}]{givens1984class}
Clark~R Givens, Rae~Michael Shortt, et~al.
\newblock A class of wasserstein metrics for probability distributions.
\newblock {\em Michigan Math. J}, 31(2):231--240, 1984.

\bibitem[\protect\citeauthoryear{Gorodetskii}{1978}]{gorodetskii1978strong}
VV~Gorodetskii.
\newblock On the strong mixing property for linear sequences.
\newblock {\em Theory of Probability \& Its Applications}, 22(2):411--413,
  1978.

\bibitem[\protect\citeauthoryear{Gretton \bgroup \em et al.\egroup
  }{2006}]{gretton2006kernel}
Arthur Gretton, Karsten~M Borgwardt, Malte Rasch, Bernhard Sch{\"o}lkopf, and
  Alex~J Smola.
\newblock A kernel method for the two-sample-problem.
\newblock In {\em Advances in neural information processing systems}, pages
  513--520, 2006.

\bibitem[\protect\citeauthoryear{Gretton \bgroup \em et al.\egroup
  }{2007}]{gretton2007kernel}
Arthur Gretton, Kenji Fukumizu, Choon~H Teo, Le~Song, Bernhard Sch{\"o}lkopf,
  and Alex~J Smola.
\newblock A kernel statistical test of independence.
\newblock In {\em Advances in neural information processing systems}, pages
  585--592, 2007.

\bibitem[\protect\citeauthoryear{Janzing \bgroup \em et al.\egroup
  }{2013}]{janzing2013quantifying}
Dominik Janzing, David Balduzzi, Moritz Grosse-Wentrup, Bernhard Sch{\"o}lkopf,
  et~al.
\newblock Quantifying causal influences.
\newblock {\em The Annals of Statistics}, 41(5):2324--2358, 2013.

\bibitem[\protect\citeauthoryear{Lauritzen}{1996}]{lauritzen1996graphical}
Steffen~L Lauritzen.
\newblock {\em Graphical models}.
\newblock Oxford University Press, 1996.

\bibitem[\protect\citeauthoryear{Massey}{1990}]{massey1990causality}
James Massey.
\newblock Causality, feedback and directed information.
\newblock In {\em Proc. Int. Symp. Inf. Theory Applic.(ISITA-90)}, pages
  303--305. Citeseer, 1990.

\bibitem[\protect\citeauthoryear{Meek}{1995}]{meek1995strong}
Christopher Meek.
\newblock Strong completeness and faithfulness in bayesian networks.
\newblock In {\em Proceedings of the Eleventh conference on Uncertainty in
  artificial intelligence}, pages 411--418. Morgan Kaufmann Publishers Inc.,
  1995.

\bibitem[\protect\citeauthoryear{Micchelli \bgroup \em et al.\egroup
  }{2006}]{micchelli2006universal}
Charles~A Micchelli, Yuesheng Xu, and Haizhang Zhang.
\newblock Universal kernels.
\newblock {\em Journal of Machine Learning Research}, 7(Dec):2651--2667, 2006.

\bibitem[\protect\citeauthoryear{M{\"u}ller}{1997}]{muller1997integral}
Alfred M{\"u}ller.
\newblock Integral probability metrics and their generating classes of
  functions.
\newblock {\em Advances in Applied Probability}, pages 429--443, 1997.

\bibitem[\protect\citeauthoryear{Pearl}{2003}]{pearl2003causality}
Judea Pearl.
\newblock Causality: models, reasoning, and inference.
\newblock {\em Econometric Theory}, 19:675--685, 2003.

\bibitem[\protect\citeauthoryear{Pearl}{2014}]{pearl2014probabilistic}
Judea Pearl.
\newblock {\em Probabilistic reasoning in intelligent systems: networks of
  plausible inference}.
\newblock Morgan Kaufmann, 2014.

\bibitem[\protect\citeauthoryear{Schreiber}{2000}]{schreiber2000measuring}
Thomas Schreiber.
\newblock Measuring information transfer.
\newblock {\em Physical review letters}, 85(2):461, 2000.

\bibitem[\protect\citeauthoryear{Singh and
  Valtorta}{1995}]{singh1995construction}
Moninder Singh and Marco Valtorta.
\newblock Construction of bayesian network structures from data: a brief survey
  and an efficient algorithm.
\newblock {\em International journal of approximate reasoning}, 12(2):111--131,
  1995.

\bibitem[\protect\citeauthoryear{Spirtes \bgroup \em et al.\egroup
  }{2000a}]{spirtes2000constructing}
Pater Spirtes, Clark Glymour, Richard Scheines, Stuart Kauffman, Valerio
  Aimale, and Frank Wimberly.
\newblock Constructing bayesian network models of gene expression networks from
  microarray data.
\newblock 2000.

\bibitem[\protect\citeauthoryear{Spirtes \bgroup \em et al.\egroup
  }{2000b}]{spirtes2000causation}
Peter Spirtes, Clark~N Glymour, and Richard Scheines.
\newblock {\em Causation, prediction, and search}, volume~81.
\newblock MIT press, 2000.

\bibitem[\protect\citeauthoryear{Sriperumbudur \bgroup \em et al.\egroup
  }{2010}]{sriperumbudur2010non}
Bharath~K Sriperumbudur, Kenji Fukumizu, Arthur Gretton, Bernhard
  Sch{\"o}lkopf, and Gert Lanckriet.
\newblock Non-parametric estimation of integral probability metrics.
\newblock In {\em Information Theory Proceedings (ISIT), 2010 IEEE
  International Symposium on}, pages 1428--1432. IEEE, 2010.

\bibitem[\protect\citeauthoryear{Sun \bgroup \em et al.\egroup
  }{2007}]{sun2007kernel}
Xiaohai Sun, Dominik Janzing, Bernhard Sch{\"o}lkopf, and Kenji Fukumizu.
\newblock A kernel-based causal learning algorithm.
\newblock In {\em Proceedings of the 24th international conference on Machine
  learning}, pages 855--862. ACM, 2007.

\bibitem[\protect\citeauthoryear{Van Der~Vaart and Wellner}{1996}]{van1996weak}
Aad~W Van Der~Vaart and Jon~A Wellner.
\newblock {\em Weak Convergence}.
\newblock Springer, 1996.

\bibitem[\protect\citeauthoryear{Villani}{2003}]{villani2003topics}
Cedric Villani.
\newblock Topics in optimal transportation (graduate studies in mathematics,
  vol. 58).
\newblock 2003.

\bibitem[\protect\citeauthoryear{Wang \bgroup \em et al.\egroup
  }{2005}]{wang2005divergence}
Qing Wang, Sanjeev~R Kulkarni, and Sergio Verd{\'u}.
\newblock Divergence estimation of continuous distributions based on
  data-dependent partitions.
\newblock {\em IEEE Transactions on Information Theory}, 51(9):3064--3074,
  2005.

\bibitem[\protect\citeauthoryear{Zhang \bgroup \em et al.\egroup
  }{2011}]{ZhangPJS2011}
K.~Zhang, J.~Peters, D.~Janzing, and B.~Sch{\"o}lkopf.
\newblock Kernel-based conditional independence test and application in causal
  discovery.
\newblock pages 804--813, Corvallis, OR, USA, July 2011. AUAI Press.

\end{thebibliography}

\newpage

\section{Appendix}

\subsection{Proof of Lemma \ref{lemma1}}\label{p:lemma1}
\noindent$\bullet$ $c^{\mathcal{K}}_{i,j}\geq0$ since Wasserstein is a metric. If $c^{\mathcal{K}}_{i,j}=0$, we have 
$
W_d\left(P(X_i|x_j,\underline{x}_{\mathcal{K}}),P(X_i|y_j,\underline{x}_{\mathcal{K}})\right)=0,
$
for all realizations $x_j, y_j$ and $\underline{x}_{\mathcal{K}}$. Using the fact that Wasserstein is a metric on the space of probability measures, the above equality, and total probability law, we obtain 
\begin{small}
\begin{align*}
&P(X_i|\underline{x}_{\mathcal{K}})=\sum_{x_j}P(X_i|x_j,\underline{x}_{\mathcal{K}})P(x_j|\underline{x}_{\mathcal{K}})\\
&=P(X_i|y_j,\underline{x}_{\mathcal{K}})\sum_{x_j}P(x_j|\underline{x}_{\mathcal{K}})=P(X_i|y_j,\underline{x}_{\mathcal{K}}).
\end{align*}
\end{small}
The above equality holds for all $y_j$ and $\underline{x}_\mathcal{K}$. This implies $X_i\independent X_j|\underline{X}_\mathcal{K}$.

\noindent$\bullet$ We show this by an example. Let $X=U_{[0,1]}$ to be uniformly distributed between zero and one, and 
$$
Y=\begin{cases}
V_{[0,1]} & \text{if}\  X\in \mathcal{A},\\
U_{[0,1]} & \text{otherwise},
\end{cases}
$$
where $\mathcal{A}=\{\frac{i}{i+1}: i\in\mathbb{N}\}$, and $V_{[0,1]}$ is a random variable independent of $U$ that is distributed non-uniformly over $[0,1]$. In this case, we have 
\begin{small}
\begin{align*}
c_{y,x}\geq\frac{W_d(P(Y|X=1/2),P(Y|X=\sqrt{2}))}{d(1/2,\sqrt{2})}>0.
\end{align*}
\end{small}
On the other hand, it is easy to see that $Y$ has a uniform distribution over $[0,1]$ almost surely. Furthermore, for two measurable sets $C$ and $B$ in the $\sigma$-algebra, we have
\begin{small}
\begin{align*}
&\!\! P(X\!\!\in C|Y\!\!\in B)=\frac{P(Y\!\!\in B|X\!\!\in C)P(X\!\!\in C)}{P(Y\!\!\in B)}=\\
&\!\! \!\! \frac{P(Y\!\!\in\!\! B|X\!\!\in\!\! C\!\! \cap\!\! \mathcal{A})P(X\!\!\in\!\! C\!\! \cap\!\! \mathcal{A})+P(Y\!\!\in\!\! B|X\!\!\in\!\! C\!\! \setminus\!\! \mathcal{A})P(X\!\!\in\!\! C\!\! \setminus\!\! \mathcal{A})}{P(Y\in B)}\\
&\!\!\! \!=\frac{P(Y\!\!\in\!\! B|X\!\!\in\!\! C\!\! \setminus\!\! \mathcal{A})P(X\!\!\in\!\! C\!\! \setminus\!\! \mathcal{A})}{P(Y\in B)}=P(X\!\!\in\!\! C\!\! \setminus\!\! \mathcal{A}).
\end{align*}
\end{small}
The last equality uses the fact that $P(Y\in B)=P(Y\!\!\in\!\! B|X\!\!\not\in\!\mathcal{A})=P(Y\!\!\in\!\! B|X\!\!\in\!\! C\!\! \setminus\!\! \mathcal{A})$. 
Thus, changing the value of $Y$ will not affect the conditional distribution of $X$ given $Y$, i.e., $c_{x,y}=0$.

\noindent$\bullet$ If $c^\mathcal{K}_{i,\{j,k\}}=0$, 
$
W_d(P(X_i|x_j,x_k,\underline{x}_{\mathcal{K}}),P(X_i|y_j,y_k,\underline{x}_{\mathcal{K}}))=0,
$
for all realization $x_j,y_j,x_k,y_k,\underline{x}_{\mathcal{K}}$. By the total probability law, we obtain
\begin{align*}
&P(X_i|x_k,\underline{x}_{\mathcal{K}})=\sum_{x_j}P(X_i|x_j,x_k,\underline{x}_{\mathcal{K}})P(x_j|x_k,\underline{x}_{\mathcal{K}})\\
&\!\!\!=P(X_i|y_j,y_k,\underline{x}_{\mathcal{K}})\sum_{x_j}P(x_j|x_k,\underline{x}_{\mathcal{K}})=P(X_i|y_j,y_k,\underline{x}_{\mathcal{K}}).
\end{align*}
This implies that $P(X_i|x_k,\underline{x}_{\mathcal{K}})=P(X_i|y_j,y_k,\underline{x}_{\mathcal{K}})=P(X_i|y_k,\underline{x}_{\mathcal{K}})$. Hence, $c^{\mathcal{K}}_{i,k}=0$. Similarly, we can prove that $c^{\mathcal{K}}_{i,j}=0$.

\noindent$\bullet$ Suppose $c^\mathcal{K}_{i,\{j,k\}}=0$, then from the previous proof, we have $P(X_i|x_k,\underline{x}_{\mathcal{K}})=P(X_i|y_k,y_j,\underline{x}_{\mathcal{K}})$, for all realizations $y_j,x_k,y_k,\underline{x}_{\mathcal{K}}$. 
Thus, $P(X_i|x_k,\underline{x}_{\mathcal{K}})=P(X_i|y_k,x_j,\underline{x}_{\mathcal{K}})$
This is equivalent to say $c^{\mathcal{K}\cup\{j\}}_{i,k}=0$. The other part can be shown similarly.

\noindent$\bullet$ If $c^{\mathcal{K}}_{i,j}=c_{i,\mathcal{K}}=0$, then from $c^{\mathcal{K}}_{i,j}=0$ and total probability law, we obtain that 
\begin{equation}\label{eses}
W_d(P(X_i|x_j,\underline{x}_{\mathcal{K}}),P(X_i|\underline{x}_{\mathcal{K}}))=0.
\end{equation}
On the other hand, using the triangle inequality of the Wasserstein metric, we have 
\begin{align*}
&W_d(P(X_i|x_j,\underline{x}_{\mathcal{K}}),P(X_i|y_j,\underline{y}_{\mathcal{K}}))\leq\\
&W_d(P(X_i|x_j,\underline{x}_{\mathcal{K}}),P(X_i|\underline{x}_{\mathcal{K}}))+W_d(P(X_i|\underline{x}_{\mathcal{K}}),P(X_i|\underline{y}_{\mathcal{K}}))\\
&+W_d(P(X_i|\underline{y}_{\mathcal{K}}),P(X_i|y_j,\underline{y}_{\mathcal{K}})).
\end{align*}
The first and third expressions on the right hand side are zero due to (\ref{eses}) and the second expression is zero due to $c_{i,\mathcal{K}}=0$.

\noindent$\bullet$ If $c^{\mathcal{K}\cup\{k\}}_{i,j}\!\!=0$, 
$
W_d(P(X_i|x_j,x_k,\underline{x}_\mathcal{K}),P(X_i|y_j,x_k,\underline{x}_\mathcal{K}))=0.
$ 
This implies that $P(X_i|x_j,x_k,\underline{x}_\mathcal{K})=P(X_i|x_k,\underline{x}_\mathcal{K})$ for all realizations $x_j, x_k$, and $\underline{x}_\mathcal{K}$.
Similarly, because of $c^{\mathcal{K}\cup\{j\}}_{i,k}\!\!=0$, we have $P(X_i|x_j,x_k,\underline{x}_\mathcal{K})=P(X_i|x_j,\underline{x}_\mathcal{K})$ for all realizations $x_j, x_k$, and $\underline{x}_\mathcal{K}$. Hence, for all realizations, we have
$
P(X_i|x_j,\underline{x}_\mathcal{K})=P(X_i|x_k,\underline{x}_\mathcal{K}).
$
This result and the total probability law will establish the result.

\subsection{The Global Markov Property}\label{p:theorem1}
Since the influence structure of this network is a DAG, there exists an ordering of the variables such that for every node $i$, all its parents have indices less that $i$. Without loss of generality suppose that $\{X_1,...,X_m\}$ is that ordering. Furthermore, using the chain rule, we have 
\begin{equation}\label{eq:lem:1}
P(\underline{X})=\prod_{i=1}^m P(X_i|\underline{X}_{\{<i\}}),
\end{equation}
where $\underline{X}_{\{<i\}}$ denotes all the variables with indices less than $i$. Due to the nature of this ordering, all the nodes in $\{<i\}$ that do not belong to $Pa_i$ are non-descendants of node $i$. Hence, by the definition of ID, they have zero influence on $X_i$ given the parents of $i$ and because of the first property in Lemma \ref{lemma1}, they can be dropped from the conditioning in (\ref{eq:lem:1}).
\\
The global Markov property is a direct consequence of Lemma \ref{lemma1} and Theorem 3.27 in \cite{lauritzen1996graphical}.

\subsection{Proof of Theorem \ref{coro}}\label{p:coro}
In order to complete the proof, we need the following technical lemmas.
When $d(\cdot,\cdot)$ is the Euclidean distance, we denote the Wasserstein metric by $W_E(\cdot,\cdot)$.
\begin{lemma}\label{pp3}
For real-valued random variables, we have
\begin{eqnarray}\label{ineq23}
&\left|\mathbb{E}_{\nu_1}[x]-\mathbb{E}_{\nu_2}[y]\right|\quad\leq & W_E(\nu_1,\nu_2)\\ \nonumber
&&\hspace{-.6cm}\leq\sqrt{\mathbb{E}_{\nu_1}[x^2]+\mathbb{E}_{\nu_2}[y^2]-2\mathbb{E}_{\pi}[xy]},
\end{eqnarray}
where $\pi$ is any joint distribution of $x$ and $y$ such that its marginals are $\nu_1$ and $\nu_2$.
\end{lemma}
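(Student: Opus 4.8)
The plan is to prove the two inequalities separately, handling the left one via the Kantorovich--Rubinstein dual and the right one via the primal (coupling) definition of the Wasserstein metric. Throughout I use that for the Euclidean metric on $\mathbb{R}$ one has $d(x,y)=|x-y|$, so that $W_E(\nu_1,\nu_2)=\inf_\pi \mathbb{E}_\pi[|x-y|]$, the infimum being over all couplings $\pi$ of $\nu_1$ and $\nu_2$.

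For the lower bound, I would invoke the dual representation (\ref{dual}) and plug in the test function $f(t)=t$. Since the identity map is $1$-Lipschitz, we have $\|f\|_{\text{Lip}}=1$, so $f$ is admissible in the supremum defining $W_E$, and it yields $|\int t\, d\nu_1 - \int t\, d\nu_2| = |\mathbb{E}_{\nu_1}[x]-\mathbb{E}_{\nu_2}[y]| \leq W_E(\nu_1,\nu_2)$. Equivalently, one can argue from the primal side: for any coupling $\pi$ one has $|\mathbb{E}_{\nu_1}[x]-\mathbb{E}_{\nu_2}[y]| = |\mathbb{E}_\pi[x-y]| \leq \mathbb{E}_\pi[|x-y|]$ by the triangle inequality, and taking the infimum over $\pi$ gives the claim.

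For the upper bound, the key observation is that the quantity under the square root is exactly the second moment of $x-y$ under $\pi$. Indeed, because the marginals of $\pi$ are $\nu_1$ and $\nu_2$, we have $\mathbb{E}_\pi[x^2]=\mathbb{E}_{\nu_1}[x^2]$ and $\mathbb{E}_\pi[y^2]=\mathbb{E}_{\nu_2}[y^2]$, so $\mathbb{E}_{\nu_1}[x^2]+\mathbb{E}_{\nu_2}[y^2]-2\mathbb{E}_\pi[xy]=\mathbb{E}_\pi[(x-y)^2]$. Treating the given $\pi$ as one (not necessarily optimal) admissible coupling, the definition of the infimum gives $W_E(\nu_1,\nu_2)\leq \mathbb{E}_\pi[|x-y|]$, and then Jensen's inequality applied to the convex map $t\mapsto t^2$ (equivalently Cauchy--Schwarz against the constant $1$) yields $\mathbb{E}_\pi[|x-y|]\leq \sqrt{\mathbb{E}_\pi[(x-y)^2]}$. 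Chaining these gives the stated upper bound, valid for \emph{every} admissible $\pi$.

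I do not expect a serious obstacle here; the result is elementary once the definitions are unwound. The only point that requires care is the identification of $\mathbb{E}_{\nu_1}[x^2]+\mathbb{E}_{\nu_2}[y^2]-2\mathbb{E}_\pi[xy]$ with $\mathbb{E}_\pi[(x-y)^2]$, which hinges on rereading the two marginal expectations as expectations under $\pi$; I would also note the implicit standing assumption that the relevant first and second moments are finite, so that all expectations and the dual pairing with $f(t)=t$ are well defined.
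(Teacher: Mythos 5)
Your proposal is correct and follows essentially the same route as the paper's own proof: the lower bound via the Kantorovich--Rubinstein dual with the $1$-Lipschitz test function $f(t)=t$, and the upper bound via Jensen's (equivalently Cauchy--Schwarz) inequality applied to an arbitrary coupling $\pi$, identifying $\mathbb{E}_{\nu_1}[x^2]+\mathbb{E}_{\nu_2}[y^2]-2\mathbb{E}_\pi[xy]$ with $\mathbb{E}_\pi[(x-y)^2]$. Your write-up is in fact more explicit than the paper's terse argument, and your remark about finite moments is a reasonable added caveat.
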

\vspace{-.2cm}
\begin{proof}
The lower bound is due to the dual representation of the Wasserstein metric and the fact that $f(x)=x$ is Lipschitz.\\
For the upper bound, we use
the Jensen's inequality, that is
\begin{equation}\label{ineq}
W_d(\nu_1,\nu_2)\leq \inf_{\pi}\left(\mathbb{E}_{\pi}[d^{p}(x,y)] \right)^{1/p},
\end{equation}
for $p\geq1$. For $p=2$, we use the monotonicity of $\sqrt{x}$, and the fact that the space of probability measures is complete and obtain the result. 
\end{proof}

Consider a network of variables in which every variable $X_i$ functionally depends on a subset of other variables $\underline{X}_{Fp_i}$ (the parent set of node $i$) as follows,
\begin{small}
\begin{equation}\label{ga}
X_i\!=\!F_i(\underline{X}_{Fp_i})\!+\!G_i(\underline{X}_{Fp_i})W_i,\ \ \forall i,
\end{equation}
\end{small}
where $F_i,G_i$ are arbitrary functions such that $G_i\neq0$. $\{W_i\}$s denote exogenous noises with mean zero.

\begin{lemma}\label{pp2}
For a system described by (\ref{ga}), the influence of node $j$ on its child $i$ given the rest of $i$'s parents $Fp_i\setminus\{j\}$ under Euclidean metric, is bounded as follows
\begin{small}
\begin{align}\label{quan}\notag
&\sup_{\substack{\overline{x}_{Fp_i}=\overline{y}_{Fp_i}\\ \text{off}\ j}} \Big{|}\frac{F_{i}(\overline{x}_{Fp_i})-F_{i}(\overline{y}_{Fp_i})}{x-y}\Big{|}\leq c^{Fp_i\setminus\{j\}}_{i,j}\leq \sup_{\substack{\overline{x}_{Fp_i}=\overline{y}_{Fp_i}\\ \text{off}\ j}} \\ 
&\!\!\!\!\left[\!\left(\frac{F_{i}(\overline{x}_{Fp_i})-F_{i}(\overline{y}_{Fp_i})}{x-y}\right)^2\!\!\!\!+\!\!\left(\frac{G_{i}(\overline{x}_{Fp_i})-G_{i}(\overline{y}_{Fp_i})}{x-y}\sigma_{i}\right)^2\right]^{1/2}\!\!\!\!\!\!\!\!.
\end{align}
\end{small}
where the suprimum is taking over all realizations of $\underline{X}_{-\{i\}}$ that are only different at $X_j$.
\end{lemma}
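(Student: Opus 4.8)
The plan is to exploit the fact that once we condition on the \emph{entire} parent set $Fp_i$ --- note the conditioning variables here are $\mathcal{K}\cup\{j\}=(Fp_i\setminus\{j\})\cup\{j\}=Fp_i$ --- the structural equation (\ref{ga}) collapses to a location--scale family driven by the single exogenous noise $W_i$. Concretely, writing $a_{\overline{x}}:=F_i(\overline{x}_{Fp_i})$ and $b_{\overline{x}}:=G_i(\overline{x}_{Fp_i})$, the conditional law $\mu_i(\overline{x}_{Fp_i})$ is exactly the distribution of $a_{\overline{x}}+b_{\overline{x}}W_i$, and likewise $\mu_i(\overline{y}_{Fp_i})$ is the law of $a_{\overline{y}}+b_{\overline{y}}W_i$. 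Both inequalities will then follow by applying Lemma \ref{pp3} for each fixed pair of realizations $(\overline{x}_{Fp_i},\overline{y}_{Fp_i})$ that agree off coordinate $j$, dividing by $|x-y|=d(x,y)$, and taking the supremum only at the end.

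For the left (lower) inequality I would invoke the lower bound in Lemma \ref{pp3}. Since each $W_i$ has mean zero, the conditional means are $\mathbb{E}[\mu_i(\overline{x}_{Fp_i})]=a_{\overline{x}}$ and $\mathbb{E}[\mu_i(\overline{y}_{Fp_i})]=a_{\overline{y}}$, so the first inequality of Lemma \ref{pp3} gives $|a_{\overline{x}}-a_{\overline{y}}|\le W_E(\mu_i(\overline{x}_{Fp_i}),\mu_i(\overline{y}_{Fp_i}))$. Dividing by $|x-y|$ and taking the supremum over all admissible realizations yields the claimed lower bound on $c^{Fp_i\setminus\{j\}}_{i,j}$.

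For the right (upper) inequality I would select the explicit \emph{synchronous} coupling $\pi$ that generates both marginals from one and the same draw of $W_i$, i.e. the push-forward of $W_i$ under $w\mapsto(a_{\overline{x}}+b_{\overline{x}}w,\ a_{\overline{y}}+b_{\overline{y}}w)$; this has the correct marginals, so the upper bound of Lemma \ref{pp3} applies and collapses to $W_E(\mu_i(\overline{x}_{Fp_i}),\mu_i(\overline{y}_{Fp_i}))\le\sqrt{\mathbb{E}_\pi[(X-Y)^2]}$. Expanding $X-Y=(a_{\overline{x}}-a_{\overline{y}})+(b_{\overline{x}}-b_{\overline{y}})W_i$ and using $\mathbb{E}[W_i]=0$ and $\mathbb{E}[W_i^2]=\sigma_i^2$ makes the cross term vanish, leaving $(a_{\overline{x}}-a_{\overline{y}})^2+(b_{\overline{x}}-b_{\overline{y}})^2\sigma_i^2$ under the root. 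Dividing by $|x-y|$ and passing to the supremum produces the stated upper bound.

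The only genuine subtleties --- hence where I would spend the most care --- are bookkeeping rather than deep. First, one must verify that conditioning on $\mathcal{K}\cup\{j\}=Fp_i$ really freezes everything except $W_i$, which is where the hypothesis $G_i\neq0$ and the mutual independence of the exogenous noises enter. Second, one must check that the per-realization inequalities survive the passage to the supremum: since for every admissible pair we have $L(\overline{x},\overline{y})\le M(\overline{x},\overline{y})\le U(\overline{x},\overline{y})$, where $M$ denotes the normalized Wasserstein ratio, taking suprema preserves both directions and gives $\sup L\le \sup M = c^{Fp_i\setminus\{j\}}_{i,j}\le \sup U$. I would also note explicitly that under the Euclidean metric $d(x,y)=|x-y|$, so the normalizing denominators in (\ref{dobb}) and in the statement coincide.
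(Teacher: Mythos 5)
Your proposal is correct and follows essentially the same route as the paper: the lower bound via the mean-difference inequality of Lemma \ref{pp3} together with $\mathbb{E}[W_i]=0$, and the upper bound via the synchronous coupling that drives both conditional laws $F_i(\overline{x}_{Fp_i})+G_i(\overline{x}_{Fp_i})W_i$ and $F_i(\overline{y}_{Fp_i})+G_i(\overline{y}_{Fp_i})W_i$ with a single draw of $W_i$ (the paper writes this same coupling as a density with the indicator $\mathbb{I}_{\{\Theta_{\overline{x}_{Fp_i}}(X_i)=\Theta_{\overline{y}_{Fp_i}}(Y_i)\}}$, which is your push-forward in disguise). Your explicit expansion of $\mathbb{E}_\pi[(X-Y)^2]$ with the vanishing cross term is exactly the computation the paper leaves implicit.
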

\begin{proof}
Using the lower bound in Lemma \ref{pp3} and the fact that $W_i$s have zero mean, we obtain the lower bound in (\ref{quan}).
\\
To obtain the upper bound, we again use the result of Lemma \ref{pp3}, with the following joint distribution $\pi(X_i,Y_i)$,
\begin{small}
\begin{align*}\notag
\frac{1}{|G_i(\overline{x}_{Fp_i})|} f_{W_i}\left(\Theta_{\overline{x}_{Fp_i}}(X_i)\right)\mathbb{I}_{\{\Theta_{\overline{x}_{Fp_i}}(X_i)=\Theta_{\overline{y}_{Fp_i}}(Y_i)\}},
\end{align*}
\end{small}
where 
\begin{small}
$
\Theta_{\overline{x}_{Fp_i}}(X_i):=\frac{X_i-F_i(\overline{x}_{Fp_i})}{G_i(\overline{x}_{Fp_i})},
$
\end{small}
and $f_{W_i}$ denotes the probability density function of $W_i$ and $\mathbb{I}$ denotes the indicator function. Using this joint distribution, we obtain the upper bound in (\ref{quan}). 
\end{proof}
Applying the above result to a linear system in which $F_i(\overline{y}_{Fp_i})=(\textbf{A}\overline{x})_i$ and $G_i(\overline{x}_{Fp_i})=1$, we obtain that $c_{i,j}^{Fp_i\setminus\{j\}}=|A_{i,j}|$.

\end{document}